\documentclass{ecai} 
\usepackage{fix-cm}

\usepackage{latexsym}
\usepackage{amssymb}
\usepackage{amsmath}
\usepackage{amsthm}
\usepackage{booktabs}
\usepackage{enumitem}
\usepackage{graphicx}
\usepackage{color}
\usepackage[dvipsnames]{xcolor}
\usepackage{algpseudocode}
\usepackage{algorithm}
\usepackage{float}

\newtheorem{theorem}{Theorem}
\newtheorem{definition}{Definition}
\newtheorem{lemma}[theorem]{Lemma}

\newcommand{\BibTeX}{B\kern-.05em{\sc i\kern-.025em b}\kern-.08em\TeX}

\begin{document}
\nolinenumbers


\begin{frontmatter}


\paperid{1197} 


\title{PLACO: A Multi-Stage Framework for Cost-Effective Performance in Human-AI Teams}


\author[A]{\fnms{Pranavkumar}~\snm{Mallela}}
\author[A]{\fnms{Vinay}~\snm{Kumar}}
\author[A]{\fnms{Shashi}~\snm{Shekhar Jha}\thanks{Corresponding Author. Email: shashi@iitrpr.ac.in.}}
\author[A]{\fnms{Shweta}~\snm{Jain}\thanks{Corresponding Author. Email: shwetajain@iitrpr.ac.in.}}

\address[A]{Indian Institute of Technology Ropar}


\begin{abstract}
Human-AI teams have a pervasive impact in various fields including healthcare diagnosis, robotics in manufacturing, cyber-security, autonomous vehicles, and many more. The effectiveness of Human-AI teams highly depends on the set of humans that interact with the AI model for determining the final output. In this paper, we tackle the practical setting where taking the human input is of considerable cost and even expert humans can make mistakes. This paper proposes Probabilistic Labeler Assisted Cost Optimization (PLACO), a two-step framework to find cost-effective subsets of humans for multi-way classification tasks. The inputs from the subset of humans are then combined with the AI model’s output resulting in the most accurate output. For cost-effective human selection given an input task, we estimate human labels by maximizing the posterior probability of a true human label given the AI model’s output on the task. We further derive a value function that determines the value of a given human subset to maximize the lower bound on the overall accuracy of the Human-AI team. We present the theoretical foundations of our human label estimation method and human subset value function. We also empirically demonstrate the effectiveness of PLACO in terms of the Human-AI team’s performance and cost-effectiveness against state-of-art methods on the CIFAR-10H and Imagenet-16H datasets having human annotations.

\end{abstract}

\end{frontmatter}


\vspace{-1.25mm}
\section{Introduction}

Human-AI teams play a pivotal role in improving overall system performance when neither the human nor the model can achieve such performance on their own. With the advent of powerful and accessible Generative AI models, several mundane tasks have morphed into Human-AI team tasks. From writing essays to developing advanced algorithms, humans have found that using AI assistance has led to an accelerated work pace like never before.

In classification tasks, where the final output is a single hard label, it is crucial to address the combination of human and model output. Prior work \cite{kerrigan2021combining} elegantly solves this problem using Bayes’ rule, using the assumption that human and model output are conditionally independent given the ground truth. Specifically, it discusses a combination method to combine a single deterministic labeler (the human) and a probabilistic labeler (the classifier model) using the model’s instance-level and the human’s class-level calibrated probabilities.
However, \citet{kerrigan2021combining} considered combining human input with the AI model's probabilistic output in a Human-AI team having only a single human along with the AI model. The follow-up work by \citet{singh2023subset} extended the Human-AI team formation with multiple humans along with the AI model. The authors showed that the overall team with multiple humans resulted in a better performance because a single human may not have the required domain knowledge or expertise to always maximize the performance of the overall system. Thus, it is beneficial to combine the inputs from multiple humans while forming a Human-AI team. 
In addition, the authors in \cite{singh2023subset} propose a greedy algorithm to select an optimal subset of humans with the assumption that human labels across all tasks can be readily acquired. This particular assumption that the humans labels are available for all tasks inherently considers that human input is of negligible cost. It further limits this Human-AI team approach for any practical applications.

In the real world, getting the labels from humans may be costly. For example, a hospital that relies on AI input to make complex diagnoses will need to consult a doctor before a decision is taken. It is desirable both from the doctor's and the hospital's perspective to use human input only as required. This saves the doctor's time and hospital resources. Thus, although Human-AI combinations considerably enhance performance when compared to the model or the human alone, for the decisions to be made in the real world, the cost of human resources must be considered.

In this paper, we propose Probabilistic Labeller-Assisted Cost Optimization (PLACO), a framework for Human-AI collaboration. Our proposed framework selects cost-optimized subset of humans having varying cost. Further, our approach relaxes the assumption that labels of all humans are required on all instances. Our goal is to select a subset of humans forming a Human-AI team that is both accurate and cost-effective. In order to achieve this, our proposed framework follows a two-step approach. In the first step, given a classification instance, we estimate human labels using human confusion matrices and instance-level AI model's confidence. We show that it significantly outperforms na\"ive estimation methods. In the second step, we use a custom value function that utilizes the aforementioned estimated human labels to determine the value of human subsets. This presents the need for an algorithm to choose the subset of humans with the highest value, while minimizing the cost. PLACO allows flexibility in the use of subset selection algorithms as a "plug and play" component of the framework.
In this paper, we adapt the greedy algorithm presented in \cite{singh2023subset} as PLACO Greedy Subset Selection. Once the algorithm optimizes the subset with respect to the value and cost function, we elicit the true human labels only from the selected subset of humans, thereby significantly bringing down the cost of the end-to-end process. 
We show that PLACO delivers comparable, if not better accuracy than existing state-of-the-art. We use four different human configurations with varying accuracies while making the entire process significantly more cost-effective.

\vspace{-1.25mm}
\section{Related Work in Human-AI Teams}
Existing literature offers several setups in which
human experts augment and complement AI/ML models. Human-AI Teams are employed in supervised learning, semi-supervised learning, unsupervised learning and reinforcement learning (e.g. \cite{wu2023toward}). Broadly, two main setups emerge from the literature: one involves human effort during data preprocessing or model training, while the other incorporates human input post-model training. While the literature \cite{wu2022survey, mosqueira2023human} extensively explores the integration of human expertise with machine learning models, our approach specifically aligns with the second type of setting.

\citet{hendrycks2016baseline} propose a method to utilize human input only in instances of low model confidence. However, this strategy encounters limitations, as human involvement may not consistently improve overall performance in such scenarios. \citet{madras2018predict} introduce an alternative approach known as the defer model, where the model learns to defer to human judgment when necessary. Many researchers have pursued this approach as is evident in the works \cite{mozannar2020consistent,verma2022calibrated,Gao2021}. \citet{gupta2023take} delve into the cost of expert input and misclassification cost of an instance, based on which the model decides whether or not to defer to a human classifier. In another work by \citet{keswani2021towards}, the model defers a low-confidence instance to a set of humans based on their respective expertise. However, it is important to acknowledge the drawbacks of the learning-to-defer approach, as discussed by \citet{leitao2022human}. This approach tends to specialize in instances of high confidence and may be susceptible to changes in data distribution. Further to accurately learn which instances should be deferred to the human expert, a large number of expert predictions that accurately refect the expert’s capabilities are required. \citet{hemmer2023learning} propose a three-step approach to reduce the number of expert predictions required to train learning to defer algorithms. \citet{babbar2022utility} proposes a unique framework, D-CP, which combines the defer model and conformal prediction. Through human subject experiments, they show increased levels of trust and utility while using conformal prediction instead of the Top-1 model prediction.

Another Human-AI approach considered in literature is that of the AI-assisted setting. Works by \citet{bansal21} have allowed humans to decide whether to incorporate the AI model's output or to solve the instance independently. \citet{fuchs2024optimizing} propose the introduction of a manager which learns how to delegate tasks to human and AI agents through a Reinforcement Learning scheme across different grid environments and risk-aversion levels. The nature of Human-AI interaction also plays a significant role in such settings as demonstrated by \citet{bondi2022role}. The work \cite{tutulhuman} investigates the levels of human trust in the AI model in the detection of deceptive speech. \citet{martinez2021improving} delve into a personalized approach of the AI-assisted setting in which they consider a set of humans and define personalized loss functions to increase Human-AI team accuracy. Another conceptual model, the Shared Memory Model (SMM), was described in detail in the work \cite{andrews2023role}. The authors discuss the factors that influence the formation of an effective SMM between a human and an AI agent including ability to acquire each other's mental models, trust, and the human-machine interface among others. 

In addition to the AI-assisted setting, authors \citet{kerrigan2021combining} have introduced and worked on the combination approach to improve overall system accuracy. Specifically, the work \cite{kerrigan2021combining} aims to generate a single label from a human’s non-probabilistic output and an independently trained model’s probabilistic output. Further enhancing this idea, \citet{singh2023subset} combine the non-probabilistic output from a subset of humans. They also show that the increase in accuracy is non-monotone, thus accentuating the need for an effective subset selection strategy.

\citet{hemmer2022forming} propose the training of a classifier to determine which instances are difficult for humans, while also training an allocation system to assign a given instance to the most suitable team member. The authors train the classifier to  consider the individual capabilities of human experts thereby leveraging the advantages of a diverse human set. Recent work \cite{zhang2023learning} presents Learning to Complement with Multiple Humans (LECOMH), an innovative approach that integrates the usage of noisy-label learning, multi-rater learning and human-AI collaboration. The authors aim to optimize human-AI collaboration by maximizing accuracy while minimizing collaboration cost.  Authors \citet{tariq2024a2c} propose the framework A2C, which facilitates three decision-making models: Automated, Augmented, and Collaborative, with emphasis on cyber-security. 

In the light of such frameworks and methodologies, we see that the inclusion of the cost of human input presents a unique challenge that must be addressed for real-world deployment. In our approach, we extend the Human-AI combined model approach for a multi-way classification task by finding the subset of humans with low cost and high value.

\vspace{-1.25mm}
\section{Preliminaries}

\begin{figure}[t]
    \centering
    \includegraphics[width=1\linewidth]{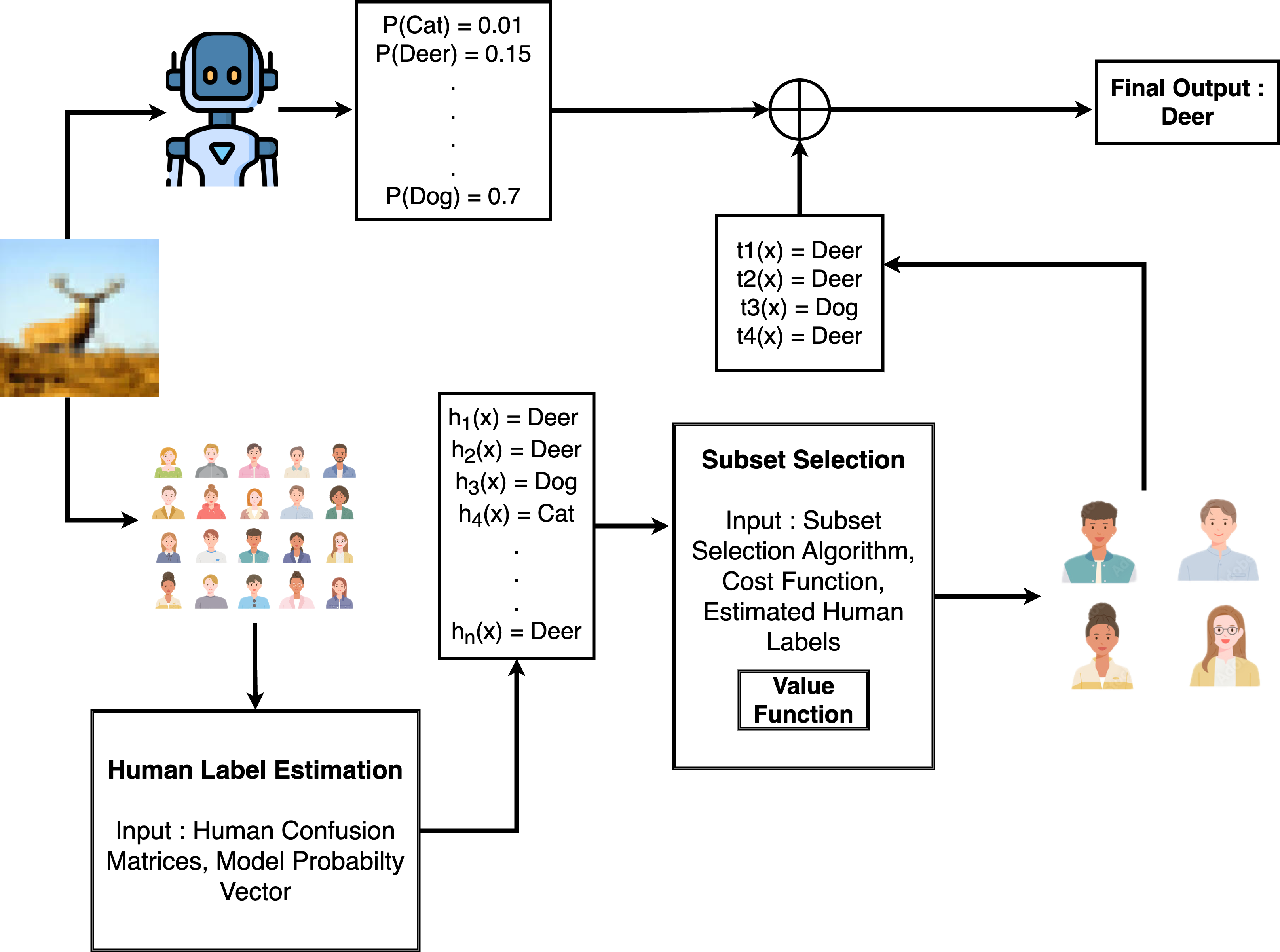}
    \vspace{-5mm}
    \caption{Illustration of Probabilistic Labeller-Assisted Cost Optimization (PLACO) Framework for Human-AI Collaboration}
    \label{fig:placo}
\end{figure}

Given a set of humans $\mathcal{H}$ and a model $m$ our aim is to select subset $\mathcal{S}_x \subset \mathcal{H}$,  for each instance $x \in \mathcal{X}$ so as to maximize the accuracy on a $k$-way classification task, while significantly reducing the cost of human input. For multi-way classification task, we consider the label set $\mathcal{Y} = \{0, 1, 2, ..., k-1\}$. Let $y(x) \in \mathcal{Y}$ denote the ground truth of an instance $x \in \mathcal{X}$. Let $m(x)$ be the AI model’s $k$-dimensional probability vector and $t_i(x)$ be the human label of the $i^{th}$ human. Consider $t(\mathcal{S}_x)$ be the collection of labels from humans in the subset  $\mathcal{S}_x$, that is, $\{t_1(x), t_2(x), ..., t_{|\mathcal{S}_x|}(x)\}$.

For a significant reduction in the cost of the required human input, we perform an estimation of human labels, in contrast to previous approaches \cite{singh2023subset} which utilize true human labels on all instances $x \in \mathcal{X}$ to select the subsets $S_x$. We denote the estimated label of the $i^{th}$ human by $h_i(x) \in \mathcal{Y}$.  Our estimation of the label of the $i^{th}$ human on a given instance $x \in \mathcal{X}$ mainly depends on two quantities: the confusion matrix of the $i^{th}$ human and the AI model probability vector $m(x)$.
We define the human confusion matrix as  ${\phi^{[i]}_{st}} = P(t_i(x) = s | y(x) = t)$ and use a Bayesian combination approach to combine $m(x)$ and $t(\mathcal{S}_x)$ to obtain the final prediction $c(x)$.

Figure \ref{fig:placo} illustrates the step-by-step procedure we employ to combine model predictions with human labels of the cost-effective human subset to derive the final classification. The classification model incorrectly predicts the label \lq Dog\rq with a probability of 0.70. Using the model's probability vector $m(x)$ and the human confusion matrices $\phi$, the framework first estimates the labels for the human population. The cost-effective subset is then chosen using these estimated labels, the user-provided Subset Selection Algorithm, the user-provided Cost Function, and the framework-integrated Value Function. The combination of labels of the cost-effective subset and model's probability vector predicts the correct final label \lq Deer\rq.

\subsection{Human Confusion Matrices}
As mentioned earlier, the confusion matrix of the $i^{th}$ human is essential to determine $h_i(x)$ for a given instance $x$. However, such confusion matrices are not readily available, thus have been estimated in previous approaches \cite{singh2023subset,kerrigan2021combining} using true human labels $t(\mathcal{H}) = \{t_1(x), t_2(x), ..., t_{|\mathcal{H}|}(x)\}$. We also adopt this approach to estimate human confusion matrices. We reserve some instances for training human confusion matrices from the human-labelled data. However, we do not require true human labels on future instances.
Existing literature \cite{kerrigan2021combining, caelen2017bayesian, venanzi2014community, singh2023subset} demonstrates that employing a Dirichlet prior for each column of the confusion matrix yields an efficient estimation. This approach requires fewer human labels to estimate the confusion matrix as it incorporates additional information through the use of the Dirichlet prior.

\begin{equation*}
    \phi^{[i]}_{st} = Dirichlet(\alpha_{t})_{s}, \hspace{0.25cm} \forall t
\end{equation*}

Given $\mathcal{K}$ classes, the prior parameter $\alpha_{t} \in \mathbb{R}^{\mathcal{K}}$ is chosen such that
\begin{equation*}
(\alpha_{t})_{k} = 
\begin{cases}
\beta,\;  k \neq t\\
\gamma,\;  k = t    
\end{cases}
\end{equation*}
where $\beta, \gamma \in \mathbb{R}_{+}$.\\
The resulting prior matrix exhibits $\gamma$ along the diagonal and $\beta$ on the off-diagonal. We opt for a Dirichlet prior for each column, ensuring that all off-diagonal prior values are identical.

Before we delve into the combination method, we must briefly describe the two essential steps of our PLACO framework. First, we utilize the above-mentioned human confusion matrices $\phi$ and model probabilities $m(x)$ on a given instance $x$ to determine the estimated human labels $h(x)$. The method of estimation is described in detail in Section \ref{subsection:label-estimation}. Second, a given subset selection algorithm uses the estimated human labels $h(x)$ in our proposed value function to select the highest valued subset $\mathcal{S}_x$ of humans. At this point, instead of all the true human labels, only the true human labels $t(\mathcal{S}_x)$ are elicited and thus, have some cost. As we are equipped with the necessary true human labels on a given instance $x$, we can now describe the combination method.

\subsection{Bayes' Combination of Model and Human Output}
Existing literature describes a dichotomy of methods to arrive at a final prediction of a Human-AI team. The first set of methods follow the deferral approach \cite{keswani2021towards,leitao2022human,madras2018predict,mozannar2020consistent}, that is, the model learns when to defer to an expert human. The expert human typically requires significant cost. However, the expert provides the correct label with high probability. In our practical setting where humans can make mistakes and whose input is of considerable cost, we posit that the deferral approach is not sufficient. Thus, we look toward the second set of methods that follow the combination approach, that is, human input and model probabilities are combined to yield a final prediction. The literature on crowdsourcing \cite{jain2018quality,lamberson2012optimal} focuses on combining multiple human labels and considers that as the predicted label to combine with the model. \citet{kerrigan2021combining} presents a Bayesian approach to combine a single human label with the model which is given as:
\begin{equation}
    P(y(x) = j | t(x) = i, m(x)) = 
    \frac{m_{j}(x)\phi_{ij}}
    {\sum_{k = 1}^{\mathcal{K}} m_{k}(x)\phi_{ij}}
\end{equation}
As we utilize multiple humans in our Human-AI teams, the combination method we require must consider the inputs from a set of humans. Inspired from \citet{kerrigan2021combining}, \citet{singh2023subset} present a probabilistic approach \textit{ComHAI} to combine \textit{n} human labels with the model. This is the combination method we employ in our framework given as:
\begin{equation}
    P(y(x) = j | t(\mathcal{H}), m(x)) = 
    \frac{m_{j}(x)\prod_{i\in [n]}\phi^{[i]}_{t_i(x)j}}
    {\sum_{k = 1}^{\mathcal{K}} m_{k}(x)\prod_{i\in [n]}\phi^{[i]}_{t_i(x)k}}
\end{equation}

where, $t(\mathcal{H}) = \{t_1(x), t_2(x), ..., t_{|\mathcal{H}|}(x)\}$.

\begin{figure*}[t]
    \centering
    \includegraphics[width=0.245\linewidth]{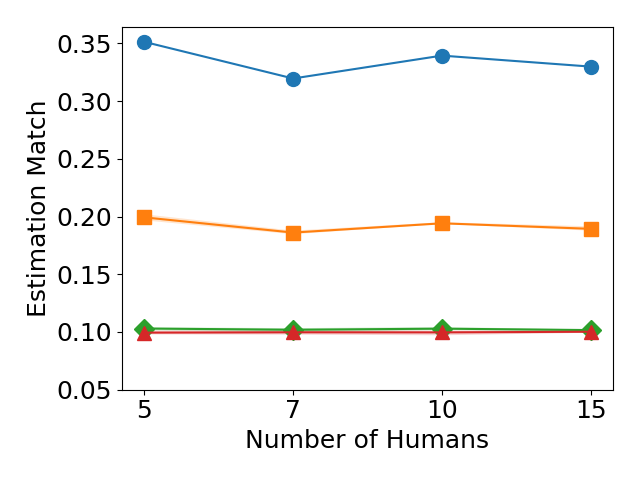}
    \includegraphics[width=0.245\linewidth]{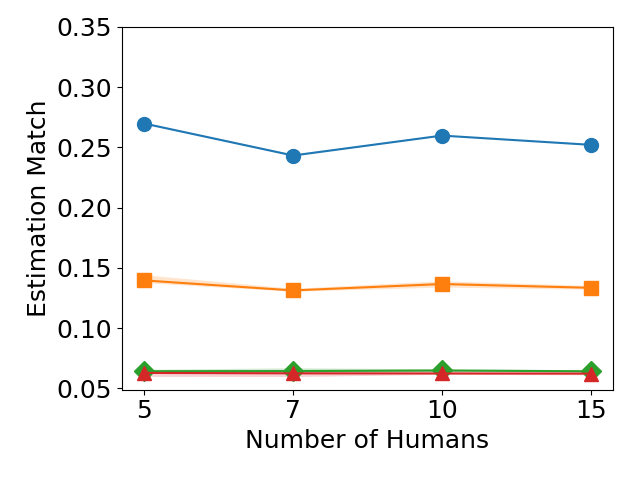}
    \raisebox{0.8\height}{\includegraphics[height=0.1\linewidth]{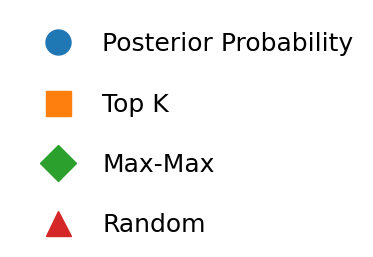}}
    \vspace{-3mm}
    \caption{Estimation Match Comparison on CIFAR-10H and ImageNet-16H respectively across different human configurations. Estimation match is the average fraction of correctly estimated human labels on a given instance. A CNN model with accuracy 56\% is used as AI model for probabilistic output for CIFAR-10H and a another CNN model with accuracy 43\% is used for ImageNet-16H to combine with human labels.}
    \label{fig:estimation_match}
\end{figure*}

\vspace{-1.25mm}
\section{Proposed Approach: PLACO}
In this section, we describe the PLACO framework to produce cost-effective and highly performant Human-AI teams. We begin with proposing our method to estimate human labels using posterior probabilities $P(t_i(x)|m(x))$. Then, we describe our value function which we employ in a given subset selection algorithm to maximize accuracy while considering the cost of human input. Figure \ref{fig:placo} contains the overall flow to produce the final combined prediction.

\subsection{Human Label Estimation using Posterior Probability}\label{subsection:label-estimation}
In order to produce a significant boost in accuracy, we choose the human label $h_i(x)$ that maximizes $P(t_i(x)|m(x))$, where $m(x)$ is the model's probabilistic output. The probability of a particular label $l$ is chosen by the $i^{th}$ human given the model's output $m(x)$ can be written as
    \begin{align*}
        P(t_i(x) = l|m(x)) & = \sum_{y \in \mathcal{Y}} P(t_i(x) = l|m(x),y) \cdot P(y|m(x))
    \end{align*}

With the assumption that the human and model are independent of each other given the ground truth $y$, the above equation leads to,

\begin{align*}
    P(t_i(x) = l|m(x)) & = \sum_{y \in \mathcal{Y}} P(t_i(x) = l|y) \cdot P(y|m(x)) 
\end{align*}

The term $P(t_i(x) = l|y)$ can be obtained from the human confusion matrix, ${\phi^{[i]}_{ly}}$. The term $P(y|m(x))$ is the model probability for a given label $y$. We determine the estimated label for the $i^{th}$ human as the label that maximizes this posterior probability. That is, 

\begin{equation}
    {h_i(x) = \arg\max_{l \in \mathcal{Y}} \sum_{y \in \mathcal{Y}} P(t_i(x) = l| y) \cdot P(y | m(x))}
\end{equation}

We present the estimation effectiveness with respect to other methods in Section \ref{section:experiments} and \ref{section:results}.

\subsection{Value Function for Cost-Effective Human Subset Selection} \label{subsection:value-function}
As mentioned earlier, it is essential to select the most suitable set of humans to assist the AI model in order to produce the most effective final prediction. 
Our custom value function forms the core of the subset selection process. Recent work \cite{singh2023subset} proposes a greedy subset selection algorithm that significantly outperforms na\"ive subset selection methods. However, this work assumes zero-cost human labels. 
We employ our own PLACO Greedy Subset Selection and an off-the-shelf linear solver to perform cost-effective subset selection, which are elaborated in Section \ref{subsection:subset-selection}.

The value function should reward humans who increase overall system's accuracy. As we cannot know the overall accuracy improvement at the time of subset selection, we invoke the lower bound on the accuracy of the combined model proven in \cite{singh2023subset}. It gives the lower bound as,
\begin{equation}\label{eq-truelbsubset}
\fontsize{8}{9}\selectfont
    \mathbb{E}[\mathbf{1}(c(x) = y(x))] \geq \mathbb{P} \left\{
    \prod_{i\in [n]} \frac{\phi^{[i]}_{t_{i}(x)y(x)}}{1 - \phi^{[i]}_{t_{i}(x)y(x)}}
    > \frac{1-m_{y(x)}(x)}{m_{y(x)}(x)}
    \right\}
\end{equation}

Here, as done in \cite{singh2023subset}, one can maximize the term $\left(\prod_{i\in [n]} \frac{\phi^{[i]}_{t_{i}(x)y(x)}}{1 - \phi^{[i]}_{t_{i}(x)y(x)}}\right)$ to maximize the accuracy lower bound. However, we tackle the setting in which human labels $t(x)$ are unavailable. As a result, we aim to find a lower bound of the ratio $\frac{\phi^{[i]}_{t_{i}(x)y(x)}}{1 - \phi^{[i]}_{t_{i}(x)y(x)}}$ using estimated human labels $h(x)$. If we can maximize the lower bound, we will have come as close as possible to maximizing the original ratio. We define the following term in order to achieve the same.

\begin{definition} (Ideal Human)
 We define the $i^{th}$ human on a given instance $x$ to be ideal if, $t_i(x) = y(x)$.
\end{definition}

\begin{lemma}\label{range} On a given instance $x$, for an ideal human $i$
    \begin{equation}\label{}
   2a_i - 1 \leq \phi^{[i]}_{t_{i}(x)y(x)} - \phi^{[i]}_{h_{i}(x)y(x)} \leq A_i
    \end{equation}
and for a non-ideal human $i$
    \begin{equation}\label{}
   -1 \leq \phi^{[i]}_{t_{i}(x)y(x)} - \phi^{[i]}_{h_{i}(x)y(x)} \leq 1
    \end{equation}
    where, $a_i = \min_{l \in \mathcal{Y}} \phi^{[i]}_{ll}$ \& $A_i = \max_{l \in \mathcal{Y}} \phi^{[i]}_{ll}$
\end{lemma}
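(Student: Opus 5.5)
The plan is to use only two structural facts about the confusion matrix $\phi^{[i]}$. First, every entry is a probability, so $0 \leq \phi^{[i]}_{st} \leq 1$. Second, each column is a distribution over the human's reported label, since $\phi^{[i]}_{st} = P(t_i(x)=s \mid y(x)=t)$. Hence $\sum_{s \in \mathcal{Y}} \phi^{[i]}_{st} = 1$ for every $t$. The estimated label $h_i(x)$ is simply some element of $\mathcal{Y}$; nothing about how it was chosen via the posterior is needed.

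The non-ideal case comes first, since it is immediate. Both $\phi^{[i]}_{t_i(x)y(x)}$ and $\phi^{[i]}_{h_i(x)y(x)}$ lie in $[0,1]$. Their difference therefore lies in $[-1,1]$.

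For an ideal human, $t_i(x) = y(x)$, so the first term is the diagonal entry $\phi^{[i]}_{y(x)y(x)}$. By definition of $a_i$ and $A_i$, it satisfies $a_i \leq \phi^{[i]}_{y(x)y(x)} \leq A_i$.

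For the upper bound, I drop the subtracted term using $\phi^{[i]}_{h_i(x)y(x)} \geq 0$. This gives a difference of at most $\phi^{[i]}_{y(x)y(x)} \leq A_i$.

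For the lower bound, I split on whether the estimate is correct.
\begin{itemize}
\item If $h_i(x) = y(x)$, the difference is $0$. This is at least $2a_i - 1$ because $a_i \leq 1$.
\item If $h_i(x) \neq y(x)$, then $\phi^{[i]}_{h_i(x)y(x)}$ is an off-diagonal entry in column $y(x)$. The column-sum identity gives $\phi^{[i]}_{h_i(x)y(x)} \leq \sum_{s \neq y(x)} \phi^{[i]}_{s\,y(x)} = 1 - \phi^{[i]}_{y(x)y(x)}$. Hence the difference is at least $2\phi^{[i]}_{y(x)y(x)} - 1 \geq 2a_i - 1$.
\end{itemize}

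The only point needing care is this last subcase. One must invoke column-stochasticity of $\phi^{[i]}$, which follows from its definition as a conditional distribution given the true class, and which the Dirichlet-per-column estimate also preserves. Mere entrywise bounds would only give the weaker lower bound $a_i - 1$. Apart from this step, the argument is bookkeeping.
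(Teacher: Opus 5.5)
\textbf{There is a genuine gap, and it cannot be patched.} It sits in your first lower-bound subcase. If $h_i(x) = y(x)$, the difference is $0$, and $0 \geq 2a_i - 1$ holds if and only if $a_i \leq \tfrac{1}{2}$. It does not follow from $a_i \leq 1$. The stated lower bound is genuinely false in this subcase whenever $a_i > \tfrac12$.

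Here is a counterexample. Take $\mathcal{K} = 2$ with $\phi^{[i]}_{00} = \phi^{[i]}_{11} = 0.9$ and $\phi^{[i]}_{10} = \phi^{[i]}_{01} = 0.1$, so $a_i = 0.9$ and $2a_i - 1 = 0.8$. Let $y(x) = t_i(x) = 0$, an ideal human, and $m(x) = (0.7, 0.3)$. The posterior scores are $0.9\cdot 0.7 + 0.1\cdot 0.3 = 0.66$ for label $0$ and $0.34$ for label $1$. So $h_i(x) = 0$, and $\phi^{[i]}_{t_i(x)y(x)} - \phi^{[i]}_{h_i(x)y(x)} = 0 < 0.8$. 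This is the typical situation for an accurate human whose label is estimated correctly, so no proof can establish the first display as written.

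The paper's own proof has the same hole, only less visibly. It asserts $\phi^{[i]}_{h_i(x)y(x)} \leq 1 - \phi^{[i]}_{t_i(x)y(x)}$ unconditionally. As you rightly observed, this column-sum step is valid only when $h_i(x) \neq t_i(x) = y(x)$. Your case split is therefore the more careful route: it exposes exactly where the lemma breaks, but you then resolved the bad case with an incorrect inequality instead of flagging it.

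Your second subcase is sound and even gives the sharper bound $2\phi^{[i]}_{y(x)y(x)} - 1$. Your upper bound and your non-ideal case also match the paper and are fine. What you can actually prove for an ideal human is the lower bound $\min\{0,\, 2a_i - 1\}$. Alternatively, you get $2a_i - 1$ under the added hypothesis $h_i(x) \neq y(x)$ or $a_i \leq \tfrac12$. Your write-up should state that the lemma needs this amendment rather than claim it in full. The next lemma, which substitutes this bound into the odds ratio, inherits the same restriction.
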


\begin{proof} On a given instance $x$,\\
For an ideal human $i$, $t_i(x) = y(x)$
\begin{align*}
    &\phi^{[i]}_{t_{i}(x)y(x)} = \phi^{[i]}_{y(x)y(x)} \leq \max_{l \in \mathcal{Y}} \phi^{[i]}_{ll} = A_i \; \& \; \phi^{[i]}_{h_{i}(x)y(x)} \geq 0
\end{align*}
\begin{align*}
    &\implies \phi^{[i]}_{t_{i}(x)y(x)} - \phi^{[i]}_{h_{i}(x)y(x)} \leq A_i
\end{align*}
Now,
\begin{align*}
    &\phi^{[i]}_{t_{i}(x)y(x)} = \phi^{[i]}_{y(x)y(x)} \geq \min_{l \in \mathcal{Y}} \phi^{[i]}_{ll} = a_i \\
    &\phi^{[i]}_{h_{i}(x)y(x)} \leq 1 - \phi^{[i]}_{t_{i}(x)y(x)} \leq 1 - a_i
\end{align*}
\begin{align*}
    &\implies \phi^{[i]}_{t_{i}(x)y(x)} - \phi^{[i]}_{h_{i}(x)y(x)} \geq a_i - (1 - a_i) = 2a_i - 1
\end{align*}
For a non-ideal human $i$, 
\begin{align*}
    &0 \leq \phi^{[i]}_{t_{i}(x)y(x)} \leq 1 \; \& \; 0 \leq \phi^{[i]}_{h_{i}(x)y(x)} \leq 1
\end{align*}
\begin{align*}
    \implies -1 \leq \phi^{[i]}_{t_{i}(x)y(x)} - \phi^{[i]}_{h_{i}(x)y(x)} \leq 1&&\qedhere
\end{align*}
\end{proof}

 Now, armed with the range of values that $\left(\phi^{[i]}_{t_{i}(x)y(x)} - \phi^{[i]}_{h_{i}(x)y(x)}\right)$ can take for an ideal human, we proceed to derive the value function.

 \begin{lemma}\label{ratio-lower-bound}
     For an ideal human $i$, on a given instance $x$, the following relation exists:
     \begin{equation}\label{lowerbound_ideal}
   \frac{\phi^{[i]}_{t_{i}(x)y(x)}}{1 - \phi^{[i]}_{t_{i}(x)y(x)}} \geq \frac{\phi^{[i]}_{h_{i}(x)y(x)} + 2a_i - 1}{2 - (\phi^{[i]}_{h_{i}(x)y(x)} + 2a_i)}
    \end{equation}
    while, for a non-ideal human $i$, we have:
    \begin{equation}\label{lowerbound_nonideal}
   \frac{\phi^{[i]}_{t_{i}(x)y(x)}}{1 - \phi^{[i]}_{t_{i}(x)y(x)}} \geq \frac{\phi^{[i]}_{h_{i}(x)y(x)} - 1}{2 - \phi^{[i]}_{h_{i}(x)y(x)}}
    \end{equation}
 \end{lemma}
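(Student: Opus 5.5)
The plan is to combine the lower bounds from Lemma \ref{range} with the monotonicity of $f(p) = \frac{p}{1-p}$. On $[0,1)$ this function is strictly increasing, since $f'(p) = \frac{1}{(1-p)^2} > 0$. Write $p = \phi^{[i]}_{t_{i}(x)y(x)}$ and $q = \phi^{[i]}_{h_{i}(x)y(x)}$.

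For an ideal human, Lemma \ref{range} gives the lower bound $p - q \geq 2a_i - 1$, that is,
\begin{equation*}
p \geq q + 2a_i - 1 .
\end{equation*}
Call the right-hand side $r$. Then
\begin{equation*}
f(r) = \frac{q + 2a_i - 1}{1 - (q + 2a_i - 1)} = \frac{q + 2a_i - 1}{2 - (q + 2a_i)},
\end{equation*}
which is exactly the right-hand side of (\ref{lowerbound_ideal}). So the claim reduces to $f(p) \geq f(r)$, which follows from monotonicity once both arguments lie in the domain where $f$ is increasing.

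For a non-ideal human, the lower bound $p - q \geq -1$ from Lemma \ref{range} gives $p \geq q - 1$. With $r = q - 1$,
\begin{equation*}
f(r) = \frac{q-1}{1-(q-1)} = \frac{q-1}{2-q},
\end{equation*}
which is the right-hand side of (\ref{lowerbound_nonideal}).

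The main care point is the domain of $f$. The function $\frac{r}{1-r}$ is increasing on all of $(-\infty, 1)$, not only on $[0,1)$, so negative values of $r$ cause no problem. I therefore need $r < 1$ and $p < 1$.

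For $r$:
\begin{itemize}
\item In the non-ideal case, $r = q - 1 \leq 0 < 1$ automatically.
\item In the ideal case, $q \leq 1 - p \leq 1 - a_i$ (the step already used in the proof of Lemma \ref{range}), so $r = q + 2a_i - 1 \leq a_i \leq p$. This gives $r < 1$ whenever $a_i < 1$.
\end{itemize}

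For $p$, I assume $p < 1$ so that the left-hand side is finite. If $p = 1$, the left-hand side is $+\infty$ and the inequality holds trivially under the paper's implicit convention. The Dirichlet prior makes every entry of each estimated $\phi^{[i]}$ strictly between $0$ and $1$, which justifies this.

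Given these domain facts, monotonicity applied to $p \geq r$ yields $f(p) \geq f(r)$ in each case, which completes both inequalities.
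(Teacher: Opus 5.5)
Your route is the paper's route. The paper's proof is just the remark that both inequalities follow by substituting the lower bounds of Lemma~\ref{range} into $p/(1-p)$, and you supply the monotonicity of $f(r)=r/(1-r)$ on $(-\infty,1)$ that this substitution tacitly uses. Your separate check that $r<1$ is redundant: Lemma~\ref{range} already gives $r\le p$, and you assume $p<1$.

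There is, however, a genuine gap, inherited from the paper, which you also invoke explicitly in your domain check. The inequality $q\le 1-p$ comes from the column of $\phi^{[i]}$ indexed by $y(x)$ summing to one. That constraint bounds $\phi^{[i]}_{h_i(x)y(x)}+\phi^{[i]}_{t_i(x)y(x)}$ only when $h_i(x)\neq t_i(x)$. If the estimate happens to be correct, $h_i(x)=t_i(x)=y(x)$, then $q=p$, and the ideal-case claim reads $f(p)\ge f(p+2a_i-1)$. This is false whenever $a_i>1/2$ and $p+2a_i<2$.

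Concretely, take two classes with $\phi^{[i]}_{00}=0.6$, $\phi^{[i]}_{10}=0.4$, $\phi^{[i]}_{01}=0.3$, $\phi^{[i]}_{11}=0.7$, and $y(x)=t_i(x)=0$. If $m_0(x)=0.9$, the posterior-maximizing estimate gives $h_i(x)=0$ (scores $0.57$ versus $0.43$). Then $p=q=a_i=0.6$, and the claimed bound becomes $1.5\ge 4$. The same case breaks the ideal-case lower bound of Lemma~\ref{range} itself, since $p-q=0<2a_i-1=0.2$.

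So the ideal-case inequality needs the extra hypothesis $h_i(x)\neq t_i(x)$ (or $a_i\le 1/2$); under it, your argument is complete. The non-ideal case is unaffected, since $f(p)\ge 0\ge f(q-1)$ holds outright.
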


 \begin{proof}
     The above relations come directly from substituting the lower bound derived in Lemma \ref{range} in the ratio $\frac{\phi^{[i]}_{t_{i}(x)y(x)}}{1 - \phi^{[i]}_{t_{i}(x)y(x)}}$ .\\ That is, for an ideal human $i$, 
     \begin{align*} 
        \phi^{[i]}_{t_{i}(x)y(x)} \geq \phi^{[i]}_{h_{i}(x)y(x)} + 2a_i - 1
     \end{align*}
     For a non-ideal human $i$,
     \begin{align*}
        \phi^{[i]}_{t_{i}(x)y(x)} \geq \phi^{[i]}_{h_{i}(x)y(x)} - 1&&\qedhere
     \end{align*}
 \end{proof}
\bigskip
 We utilize these relations accordingly in order to present our proposed value function for ideal and non-ideal humans. However, as we cannot know whether a human is ideal or not on a given instance $x$ due to the lack of information of $t_i(x)$ and $y(x)$, we resort to the probability $P(t_i(x) = y(x))$ i.e. accuracy of the $i^{th}$ human as an indicator. Also, to compensate for the unavailability of $y(x)$, we use a label $j$ in its place which is taken as a parameter by the value function. Equation \ref{y_star_eq} shows how we select the label $j$ to give us an approximation of $y(x)$.
 
 Additionally, in order to proceed, it is essential to handle the corner cases in Lemma \ref{ratio-lower-bound}. For an ideal human, as the sum $\left(\phi^{[i]}_{h_{i}(x)y(x)} + 2a_i\right)$ approaches the value 2, the lower bound in Equation \ref{lowerbound_ideal} reaches arbitrarily high values. Hence, if the sum exceeds 2, we assign a very high value $V_{max}$ to that human. Similarly, as $\left(\phi^{[i]}_{h_{i}(x)y(x)} + 2a_i\right)$ approaches the value 1, the ratio approaches 0. In order to avoid misleading comparisons, we assign a very small positive value $\epsilon$ instead of 0. In the case of a non-ideal human, Lemma \ref{ratio-lower-bound} gives a non-positive lower bound. Similar to the previous case, we assign a very small positive value $\epsilon$ to avoid misleading comparisons. $V_{max}$ and $\epsilon$ are specified in Section \ref{subsection:subset-selection}.

Incorporating the above cases, our PLACO Framework proposes the following piecewise function as the value function for the $i^{th}$ human on given instance $x$ for a label $j$:
\begin{equation}\label{value_function}
V_i(x, j) = 
\begin{cases}
    \frac{\phi^{[i]}_{h_{i}(x)j} + 2a_i - 1}{2 - (\phi^{[i]}_{h_{i}(x)j} + 2a_i)} & \text{if } \begin{aligned}[t]
                        &P(t_i(x) = y(x)) \geq 0.5 \text{ and }\\
                        &\phi_{h_i(x)j} + 2a_i \in (1,2)
                     \end{aligned}\\\\
    \epsilon & \text{if } \begin{aligned}[t]
                        &P(t_i(x) = y(x)) \geq 0.5 \text{ and }\\
                        &\phi_{h_i(x)j} + 2a_i \leq 1
                     \end{aligned}\\\\
    V_{max} & \text{if } \begin{aligned}[t]
                        &P(t_i(x) = y(x)) \geq 0.5 \text{ and }\\
                        &\phi_{h_i(x)j} + 2a_i \geq 2
                     \end{aligned}\\\\
    \epsilon & \text{if } \begin{aligned}[t]
                        &P(t_i(x) = y(x)) < 0.5
                     \end{aligned}
\end{cases}
\end{equation}
where, $a_i = \min_{l \in \mathcal{Y}} \phi^{[i]}_{ll}$. 

We define the label $y^*$ which is an approximation of $y(x)$ as, 
\begin{equation}\label{y_star_eq}
    y^* = \arg\max_{j \in \mathcal{Y}}\prod_{i\in \mathcal{H}} V_i(x,j)
\end{equation}
It is natural to choose the label $y^*$ in this way as it maximizes the product of human values which in turn maximizes the probability in Equation \ref{eq-truelbsubset}. This is because the value function itself is derived as the lower bound of the ratio $\frac{\phi^{[i]}_{t_{i}(x)y(x)}}{1 - \phi^{[i]}_{t_{i}(x)y(x)}}$ in Equation \ref{eq-truelbsubset}.

\begin{figure*}[t]
    \centering
    \includegraphics[width=0.245\linewidth]{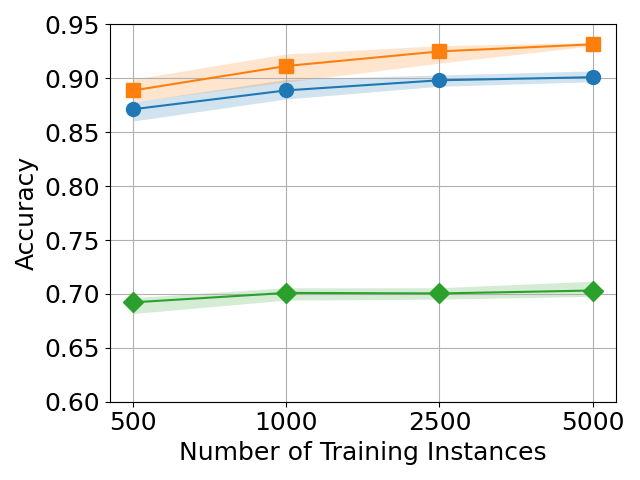}
    \includegraphics[width=0.245\linewidth]{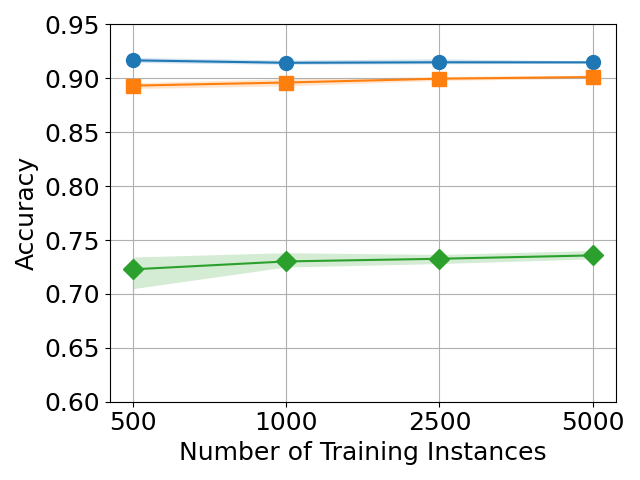}
    \includegraphics[width=0.245\linewidth]{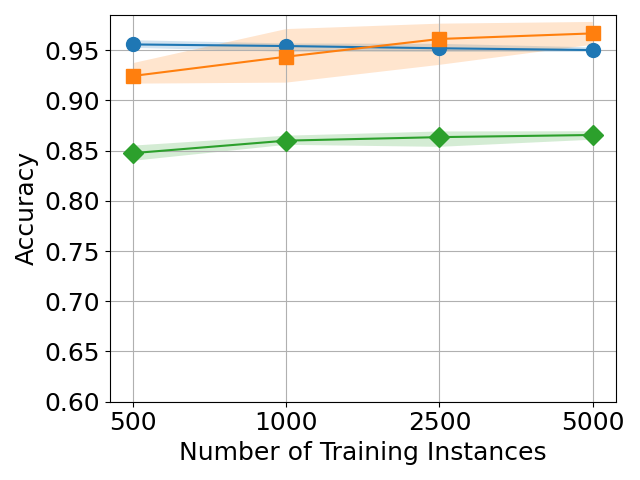}
    \includegraphics[width=0.245\linewidth]{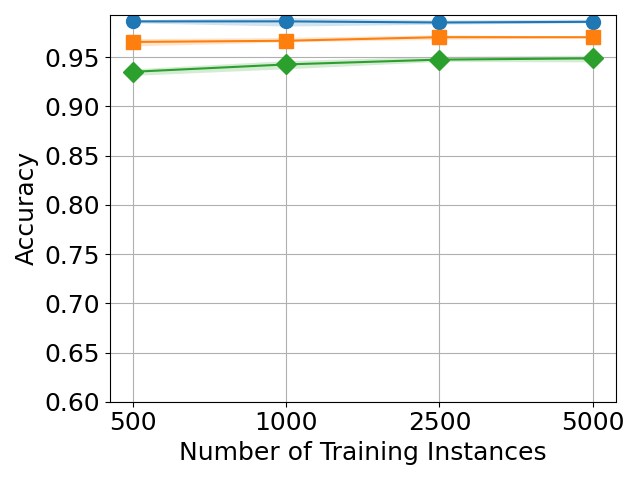}
    \includegraphics[width=0.245\linewidth]{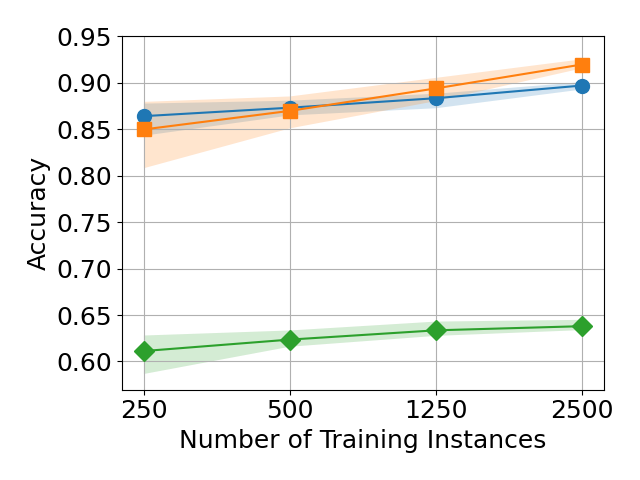}
    \includegraphics[width=0.245\linewidth]{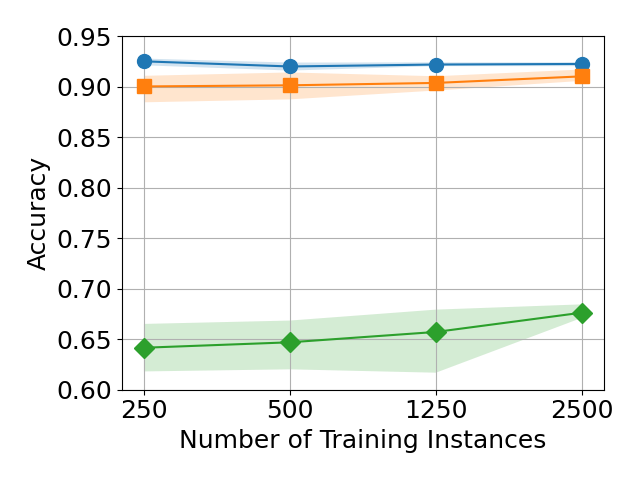}
    \includegraphics[width=0.245\linewidth]{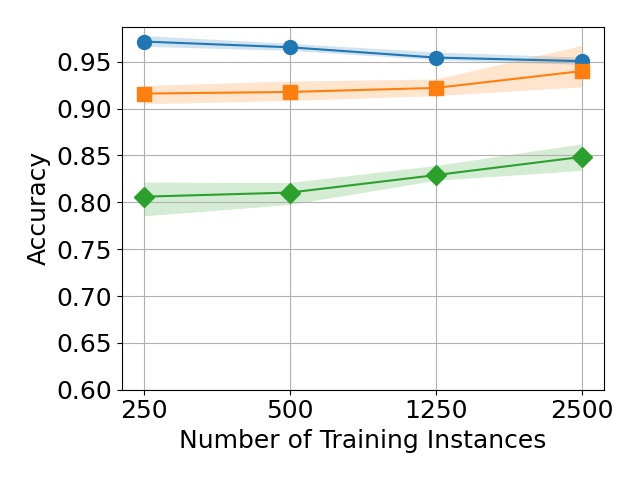}
    \includegraphics[width=0.245\linewidth]{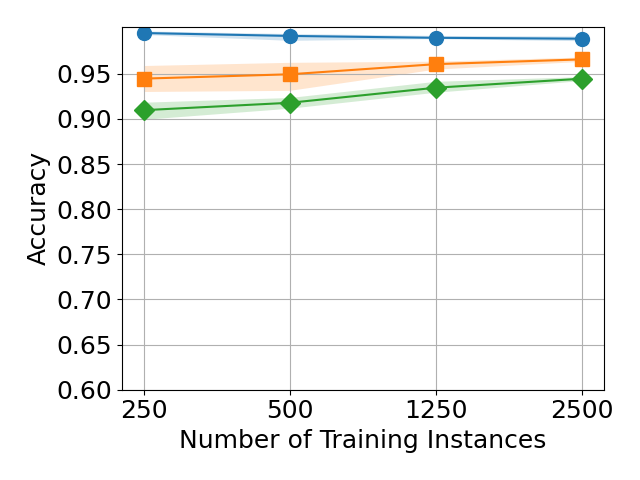}
    \includegraphics[width=0.4\linewidth, height=0.03\linewidth]{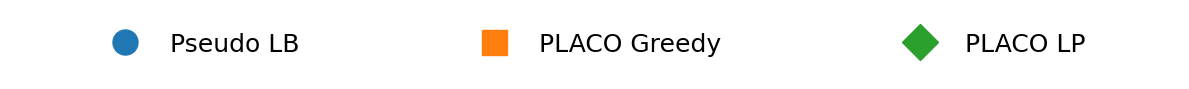}
    \vspace{-3mm}
    \caption{Learning curve with different subset selection methods average over 10 runs. Each plot corresponds to a different human configuration of varying accuracies. Specifically, the number of humans in the configurations are 5, 7, 10 and 15 (from left to right in each row), with accuracies ranging from 0.3 to 0.9. The first row corresponds to CIFAR-10H and the second corresponds to ImageNet-16H. A CNN model with accuracy 56\% is used as AI model for probabilistic output for CIFAR-10H and a another CNN model with accuracy 43\% is used for ImageNet-16H to combine with human labels.}
    \label{fig:accuracycifar}
\end{figure*}


\subsection{PLACO Greedy Subset Selection}
To maximize performance, we must select the subset $S_x$ for a given instance $x$ that maximizes the product of human values, that is,
\begin{align*}
    S_x = \arg\max_{S \subseteq \mathcal{H}}\prod_{i\in S} V_i(x, y^*)
\end{align*}
Now, we maximize this product by first choosing the human with the highest value. Then, we only add humans if they increase the product. Clearly, humans added in this phase will be chosen only if their value $V_i(x, y^*)$ is greater than 1. We present our PLACO Greedy algorithm in Algorithm \ref{placo_greedy_algo}.

\begin{algorithm}
\caption{PLACO Greedy Subset Selection}\label{placo_greedy_algo}
\begin{algorithmic}[1]
\Require Instance $x \in \mathcal{X}$, human set $\mathcal{H}$, cost function $C$
\State Let $subset \gets \{\}, subset\_cost \gets 0$
\State Let $max\_human \gets -1, max\_value \gets -1$
\State Let $y^* \gets \arg\max_{j \in \mathcal{Y}}\prod_{i\in \mathcal{H}} V_i(x,j)$
\For{$i \gets 1$ to $|\mathcal{H}|$}
    \State $human\_value \gets V_i(x, y^*)$
    \If{$human\_value > max\_value$}
        \State $max\_value \gets human\_value$
        \State $max\_human \gets i$
    \EndIf
    \If{$human\_value > 1$}
        \State $subset \gets subset \cup i$
        \State $subset\_cost \gets subset\_cost + C(i)$
    \EndIf
\EndFor
\If{$subset$ is empty}
    \State $subset \gets subset \cup max\_human$
    \State $subset\_cost \gets subset\_cost + C(max\_human)$
\EndIf
\State \textbf{Output} $subset$, $subset\_cost$
\end{algorithmic}
\end{algorithm}

Lines 1-2 initialize the subset, subset cost, and variables to store the highest valued human. Line 3 determines the label $y^*$. Lines 4-14 perform condition checks on every human in $\mathcal{H}$ to decide their selection. Finally, lines 15-18 ensure to always select the highest valued human.

\vspace{-1.25mm}
\section{Experiments}\label{section:experiments}
We performed various experiments to demonstrate the effectiveness of PLACO. First, we show how our human label estimation method significantly outperforms na\"ive estimation methods described in Section \ref{subsection:naivemethods}. Next, we validate PLACO in terms of accuracy and cost spent on multi-way image classification tasks using the CIFAR-10H and ImageNet-16H datasets.

\subsection{Human Labels and AI Model}
Based on the annotations in the dataset we estimate true labels for humans for each configuration. The hard labels are determined by one's accuracy. Let there be $n$ human annotations available for an image \textit{x} in the dataset given by \textit{t(x)} = $\{t_1(x),t_2(x),....,t_n(x)\}$ and the ground truth is \textit{y(x)}. Given $\mathcal{K}$ classes, we define a distribution over all classes:
\begin{equation*}
    p(k|x) = 
    \begin{cases}
        0 & \text{if } k = y(x), \\
        \frac{g_x(k)}{\sum_{i \in \mathcal{Y'}(x)}g_x(i)} & \text{if } k \neq y(x) \text{ and } g_x(y(x)) \neq 1, \\
        \frac{1}{\mathcal{K}-1} & \text{if } k \neq y(x) \text{ and } g_x(y(x)) = 1
    \end{cases}
\end{equation*}
where $g_x(k)$ denotes the fraction of humans who predicted label \textit{k} for image \textit{x} and $\mathcal{Y'}(x) = \{1,2,3...,\mathcal{K}\} - \textit{y(x)}$. We assign hard labels $t^{a}(x)$ for a human with accuracy \textit{a} as follows:

\begin{equation*}
    t^{a}(x) = 
    \begin{cases}
        y(x) & \text{with probability } a, \\
        t \sim p(k|x) & \text{with probability } 1 - a
    \end{cases}
\end{equation*}

For both datasets CIFAR-10H and ImageNet-16H, we use this estimation method to generate true human labels.

To show the effectiveness of  the Human-AI team approach, we use a CNN model \cite{singh2023subset} that has 56.74\% accuracy for CIFAR-10H dataset and another CNN model \cite{Steyvers_Tejeda_2023imagenet} for ImageNet-16H dataset that has 43.40\% accuracy for the 10-way and 16-way classification tasks respectively. 

\subsection{Na\"ive Human Label Estimation}\label{subsection:naivemethods}
To perform human label estimation we assume only to have the model's probability vector $m(x)$ and the estimated human confusion matrices $\phi^{[i]}$. There are several ways to utilize the given information to generate an estimated label for each human. Here we describe some of the na\"ive methods for label estimation:
\begin{itemize}
    \item \textbf{Max-Max}: Consider the most probable label of each human for each instance. This can be deduced from that human's confusion matrix; the cell with the highest probability value will determine the estimated label. Note that this method generates the same label for a human irrespective of the current instance.
\begin{equation*}
    h_i(x) = \arg\max_{y_1 \in \mathcal{Y}} (\max_{y_2 \in \mathcal{Y}} {\phi^{[i]}_{y_1y_2}})
\end{equation*}
    \item \textbf{Random}: Assign any one label out of the $K$ labels as the estimated label to each humans such that all $K$ labels are equally likely.
    \item \textbf{Top-K}: Consider top $k$ most probable labels from the model probability vector $m(x)$. Choose any label $y_1$ at random and combine this information with the confusion matrix. Specifically, the estimated label of $i^{th}$ human is
\begin{equation*}
    h_i(x) = \arg\max_{y_2 \in \mathcal{Y}} {\phi^{[i]}_{y_1y_2}}
\end{equation*}
    
\end{itemize}

\subsection{Subset Selection Algorithms}\label{subsection:subset-selection}
We use three subset selection algorithms to show the effectiveness of our proposed approach PLACO. Cost constraints are added from one algorithm to the next. While Pseudo LB operates with the knowledge of true human labels, PLACO Greedy performs cost-effective subset selection using estimated human labels. PLACO LP operates in the scenario when a budget constraint is also introduced. In addition, we set the values of $V_{max} = 10^9$ and $\epsilon = 10^{-9}$ in our value function used in our experiments.

\begin{figure*}[t]
    \centering
    \includegraphics[width=0.245\linewidth]{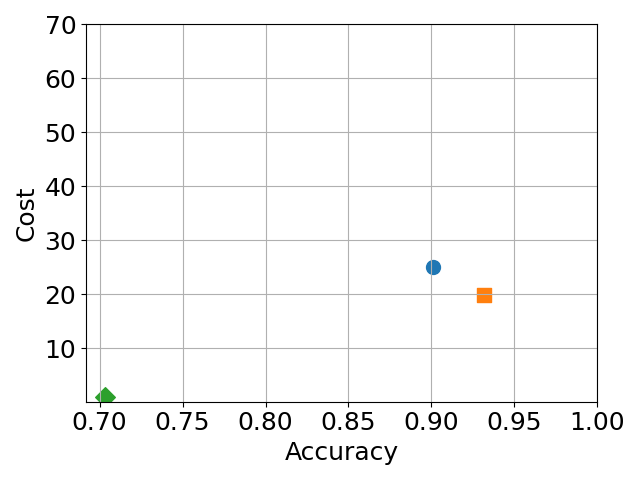}
    \includegraphics[width=0.245\linewidth]{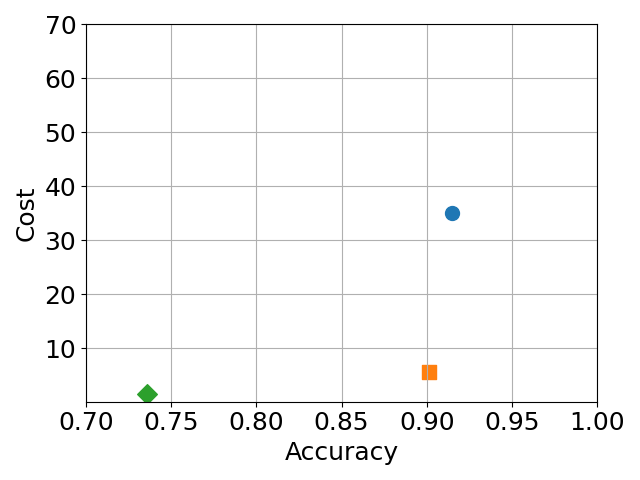}
    \includegraphics[width=0.245\linewidth]{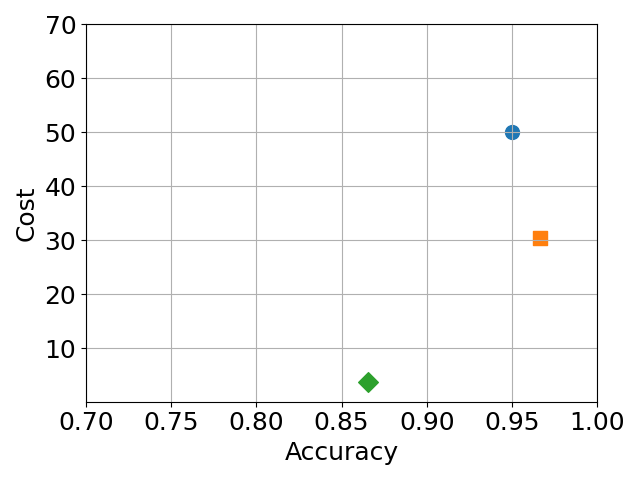}
    \includegraphics[width=0.245\linewidth]{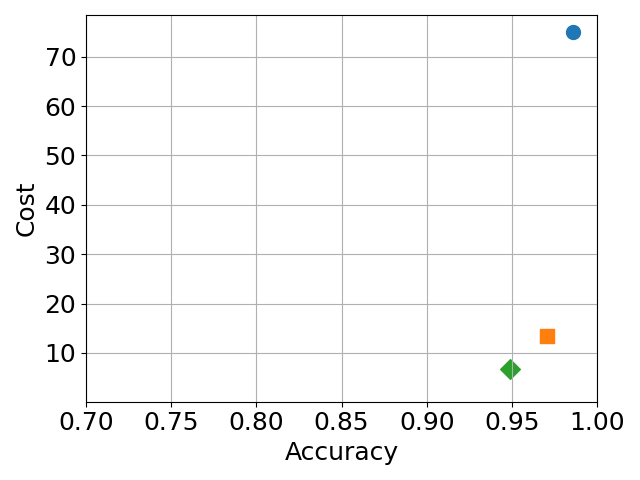}
    \includegraphics[width=0.245\linewidth]{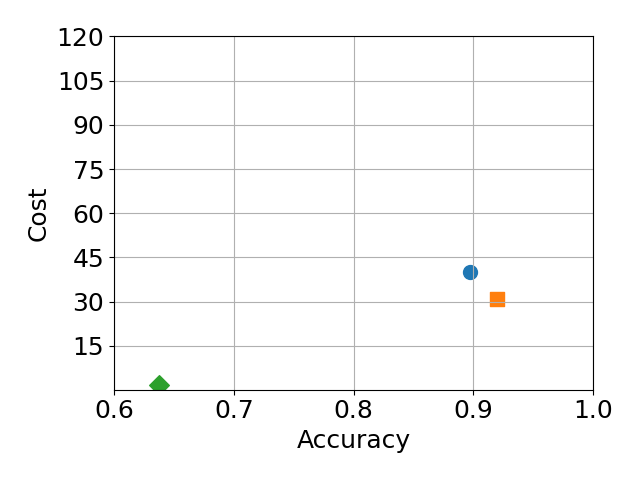}
    \includegraphics[width=0.245\linewidth]{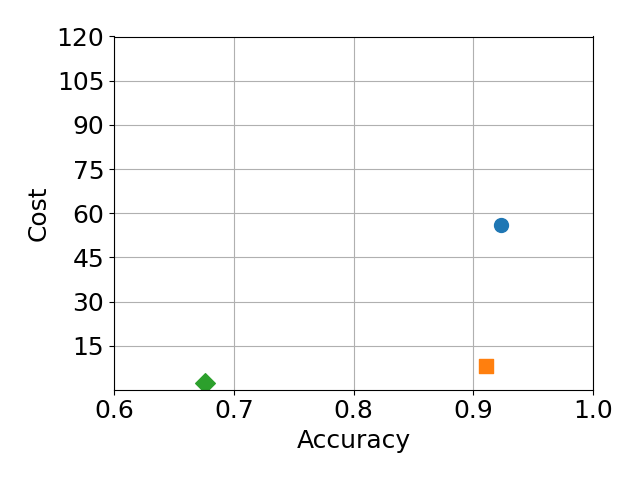}
    \includegraphics[width=0.245\linewidth]{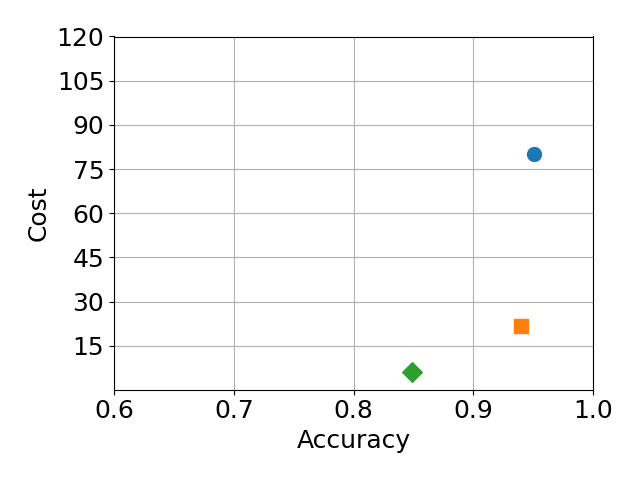}
    \includegraphics[width=0.245\linewidth]{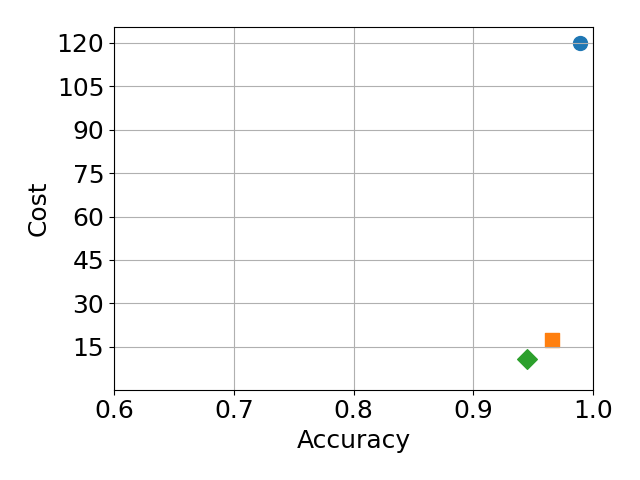}
    \includegraphics[width=0.4\linewidth, height=0.03\linewidth]{Figures/legend.png}
    \vspace{-3mm}
    \caption{Accuracy vs Cost Trade-Off Scatter Plot for  with different subset selection methods average over 10 runs. Each plot corresponds to a different human configuration of varying accuracies. Specifically, the number of humans in the configurations are 5, 7, 10 and 15 (from left to right in each row), with accuracies ranging from 0.3 to 0.9. The first row corresponds to CIFAR-10H (5000 training instances) and the second corresponds to ImageNet-16H (2500 training instances). A CNN model with accuracy 56\% is used as AI model for probabilistic output for CIFAR-10H and a another CNN model with accuracy 43\% is used for ImageNet-16H to combine with human labels.}
    \label{fig:tradeoffcifar}
\end{figure*}

\subsubsection{Pseudo LB: Pseudo Optimal Subset Selection}
Singh et al. \cite{singh2023subset} introduced this greedy approach to maximize the lower bound on the overall accuracy of the Human-AI team. The usage of this algorithm depicts the state-of-the-art performance when true human labels are known for all humans across all instances. Equation \ref{eq-pseudo-optimal} shows how Pseudo LB performs subset selection. 
\begin{equation}\label{eq-pseudo-optimal}
    S_x = \arg\max_{S} \max_{1 \leq j \leq \mathcal{K}} \left(\prod_{i\in S} \frac{\phi^{[i]}_{t_{i}(x)j}}{1 - \phi^{[i]}_{t_{i}(x)j}}\right)
\end{equation}
\subsubsection{PLACO Greedy: Cost-Effective Subset Selection}
Inspired from the above greedy approach, we introduce a cost-effective version. It maximizes a further lower bound which arises from human label estimation, which has been derived in Section \ref{subsection:value-function}. This approach depicts the upper bound on performance when only estimated human labels are known at the time of subset selection.
\subsubsection{PLACO LP: Cost-Effective LP Solver}
We formulate our subset selection problem as an integer linear programming problem as
\begin{align*}
    \max_S \sum_{i \in S} \log V_i(x, y^*) \cdot e_i
\end{align*}
\vspace{-3mm}
subject to
\vspace{-3mm}
\begin{align*}
    \sum_{i \in S} c_i \cdot e_i \leq B \; , \; e_i \in \{0, 1\}
\end{align*}
where, $B$ is given budget, $V_i(x, y^*)$ is the value of human $i$ for an instance $x$, $e_i$ is the decision variable whether or not to choose the $i^{th}$ human, and $c_i$ is the cost of the $i^{th}$ human.

\subsection{Cost Function and Budget}
In order to free ourselves of any assumptions on the cost distribution of the human set $\mathcal{H}$, we choose an arbitrary cost function. Specifically, for each run, we fix the cost of the $i^{th}$ human on a given $k$-way classification task as a random real number $c_i$ where $c_i \in (0, k)$. The budget $B$ used in PLACO LP is derived as a fraction of the upper bound cost of the entire human set $\mathcal{H}$. For our experiments, we use the budget $B = 0.05 * |\mathcal{H}| * k$. Note that this cost function is a worst case scenario. A well-behaved cost function such as a function of human accuracy will only improve our framework's performance.

\vspace{-1.25mm}
\section{Results and Inferences}\label{section:results}

Figure \ref{fig:estimation_match} shows the estimation match, that is, the average fraction of correctly estimated human labels on a given instance. It is clear from the figure that our posterior probability method estimates human labels with significantly higher accuracy than na\"ive methods such as Max-Max, Top K, and Random. It can be noted that the selected subsets are vastly effective in spite of the estimation match being in the range 0.25 to 0.35. This is expected as this range nearly matches the fraction of ideal humans in a given configuration. The decrease in estimation match from CIFAR-10H to ImageNet-16H is due to more classes and lesser instances to construct human confusion matrices in ImageNet-16H as compared to CIFAR-10H.

Figure \ref{fig:accuracycifar} depicts a performance evaluation of various subset selection methods, namely Pseudo LB, PLACO Greedy, and PLACO LP, as elaborated in Section \ref{subsection:subset-selection}, on the CIFAR-10H and ImageNet-16H datasets. Pseudo LB depicts the best possible accuracy given true human labels of all humans in all instances. On the other hand, PLACO Greedy exhibits the best possible accuracy using the PLACO framework, i.e., using estimated human labels. As a direct result of Lemma \ref{ratio-lower-bound}, we expect PLACO Greedy to perform as well as Pseudo LB in the best case when maximizing the lower bound in equation \ref{lowerbound_ideal} allows us to select the same subset as Pseudo LB. Note that the same subset can be selected using PLACO Greedy at a significantly lower cost when compared to Pseudo LB. Our experimental results in Figure \ref{fig:accuracycifar} align perfectly with these expectations. If further cost optimisation is required, a budget constraint can be introduced, and the subset selection task can be formulated as a linear optimization problem. It is expected that such a subset selection will keep the cost strictly bounded;  it can possibly offer lower performance due to additional constraints. PLACO LP demonstrates this tradeoff as expected.

From Figure \ref{fig:tradeoffcifar} , it is evident that Pseudo LB exhibits high accuracy along with high cost, as it operates without any cost constraints and relies on prior knowledge of true human labels, thus considering the cost of all individuals. PLACO Greedy achieves accuracy at par with Pseudo LB with a significantly lower cost. This is because PLACO uses estimated human labels for subset selection instead of true human labels. Hence, it only takes into account the cost of the true human labels of the selected subset of humans. PLACO LP does further cost optimisation by introducing a budget constraint on the subset to be selected. While the cost is bounded by the given budget, we see that accuracy is negatively affected due to the insufficient budget. However, as the total number of humans increases, the same budget allows more ideal humans to be selected which results in greater accuracy, as the budget is proportional to $|\mathcal{H}|$.

\vspace{-1.25mm}
\section{Discussions and Conclusions}
In this paper, we propose a framework, PLACO, to select cost-optimized and highly performant human subsets to form effective Human-AI teams in a practical setting where we do not have prior knowledge of human true labels. We have empirically shown its efficacy on accuracy and cost parameters on two image datasets, CIFAR-10H and ImageNet-16H. 
    
An interesting avenue of exploration would be the setting in which human cost is correlated with their respective domain knowledge, as currently we only consider a random cost function. In addition, although we predict that PLACO can perform just as well on tasks other than image classification, one can empirically verify this on relevant datasets. Also, fairness in AI-decision making is becoming increasingly relevant everyday. Hence, studying the fairness impacts of the PLACO framework can also prove very useful.

\vspace{-1.25mm}
\section{Acknowledgements}
This work has been partially supported by the National Science and Technology Council, Taiwan NSC-112-2927-I-194-001 and NSC-111-2927-I-194-001 along with SERB grant CRG/2023/002142.

\bibliography{/m1197}

\end{document}